\algnewcommand{\LineComment}[1]{\State\(\vartriangleright\) #1}
\title{Latent Bandits Revisited}
\def\Sset{\mathcal{S}}
\def\Aset{\mathcal{A}}
\def\Xset{\mathcal{X}}
\renewcommand{\hat}{\widehat}
\newcommand{\indicator}[1]{\mathds{1} \! \left\{#1\right\}}
\newcommand{\E}[2]{\mathbb{E}_{#1} \left[#2\right]}
\newcommand{\prob}[1]{\mathbb{P}\left(#1\right)}
\newcommand{\abs}[1]{\left|#1\right|}
\newcommand{\mexp}{\ensuremath{\tt EXP4}\xspace}
\newcommand{\lints}{\ensuremath{\tt LinTS}\xspace}
\newcommand{\linucb}{\ensuremath{\tt LinUCB}\xspace}
\newcommand{\mucb}{\ensuremath{\tt mUCB}\xspace}
\newcommand{\mts}{\ensuremath{\tt mTS}\xspace}
\newcommand{\mmucb}{\ensuremath{\tt mmUCB}\xspace}
\newcommand{\mmts}{\ensuremath{\tt mmTS}\xspace}
\newcommand{\ts}{\ensuremath{\tt TS}\xspace}
\newcommand{\ucb}{\ensuremath{\tt UCB1}\xspace}
\def\Regret{\mathcal{R}}
\def\Bregret{\mathcal{BR}}
\newtheorem{lemma}{Lemma}
\newtheorem{corollary}{Corollary}
\newtheorem{theorem}{Theorem}
\author{
Joey Hong \\ 
Google Research \\
\texttt{jxihong@google.com} \\
\And
Branislav Kveton \\
Google Research \\
\texttt{bkveton@google.com} \\
\And
Manzil Zaheer \\
Google Research \\
\texttt{manzilzaheer@google.com} \\
\And
Yinlam Chow \\
Google Research \\
\texttt{yinlamchow@google.com} \\
\And
Amr Ahmed \\
Google Research \\
\texttt{amra@google.com} \\
\And
Craig Boutilier \\
Google Research \\
\texttt{cboutilier@google.com} \\
}
\begin{document}

\maketitle

\begin{abstract}
A \emph{latent bandit problem} is one in which the learning agent knows the arm reward distributions conditioned on an \emph{unknown discrete latent state}. The primary goal of the agent is to identify the latent state, after which it can act optimally. This setting is a natural midpoint between online and offline learning---complex models can be learned offline with the agent identifying latent state online---of practical relevance in, say, recommender systems.
In this work, we propose general algorithms for this setting, based on both upper confidence bounds (UCBs) and Thompson sampling. Our methods are contextual and  aware of model uncertainty and misspecification. We provide a unified theoretical analysis of our algorithms, which have lower regret than classic bandit policies when the number of latent states is smaller than actions. A comprehensive empirical study showcases the advantages of our approach.
\end{abstract}


\section{Introduction}

Many online platforms, such as search engines or recommender systems, display results based observed properties of the user and their query. However, a user's behavior is often influenced by \emph{latent state} not explicitly revealed to the system. This might be \emph{user intent} (e.g., reflecting a long-term task) in search, or \emph{user (short- and long-term) preferences} (e.g., reflecting topic interests) in a recommender. The unobserved latent state in each case influences the user response (hence, the associated reward) of the displayed results. A machine learning (ML) system, thus, should take steps to infer the latent state and tailor its results accordingly.

While many ML models use either heuristic features \cite{linucb,lints} or recurrent models \cite{rnn_recommender} to capture user history, explicit exploration for \emph{(latent) state identification} (i.e., reducing uncertainty regarding the true state) is less common in practice. In this paper, we study \emph{latent bandits}, which model online interactions of the type above. At each round, the learning agent is given an observed context (e.g., query, user demographics), selects an action (e.g., recommendation),
and observes its reward (e.g., user engagement with the recommendation). The action reward depends stochastically on both the context and the user latent state.
Hence the observed reward provides information about the unobserved latent state, which can be used to improve predictions at future rounds. We are interested in designing exploration policies that allow the agent to quickly maximize its per-round reward by resolving \emph{relevant} latent state uncertainty. Specifically, we want policies that have low \emph{$n$-round regret}.

Latent class structure of this form can allow an agent to quickly adapt it results to new users (e.g., cold start in recommenders) or adapt to new user tasks or intents on a per-session basis. For instance, clusters of users with similar item preferences can be used as the latent state of a new user. Estimated latent state can be used to quickly reach good cold-start recommendations if the number of clusters is much less than the number of items \citep{latent_contextual_bandits}. 


\emph{Fully} online exploration (e.g., for personalization) also involves learning a reward model---conditional on context and latent state---and generally requires massive amounts of interaction data.
Fortunately, many platforms have just such \emph{offline} data (e.g., past user interactions) with which to construct both a latent state space and reasonably accurate conditional reward models \citep{yahoo_contextual,msft_paper}. 
We assume such a model is available and focus on the simpler online problem of state identification. While previously studied, prior work on this problem  assumes the \emph{true} conditional reward models is given \citep{latent_bandits, latent_contextual_bandits}. Moreover, these algorithms are UCB-style, with optimal theoretical guarantees, but sub-par empirical performance. We provide a unified framework that combines offline-learned models with online exploration for both UCB and Thompson sampling algorithms, and propose practical, analyzable algorithms that are contextual and robust to natural forms of model imprecision.

Our main contributions are as follows. Our work is the first to propose algorithms that are aware of model uncertainty in the latent bandits setting. In Sec.~\ref{sec:algorithms}, we propose novel, practical algorithms based on UCB and Thompson sampling. Using a tight connection between UCB and posterior sampling \cite{russo_posterior_sampling}, 
we derive optimal theoretical bounds on the Bayes regret of our approaches in Sec.~\ref{sec:analysis}. Finally, in Sec.~\ref{sec:experiments}, we demonstrate its effectiveness vis-\'{a}-vis state-of-the-art benchmarks using both synthetic simulations and a large-scale real-world recommendation dataset.


\section{Problem Formulation}

We adopt the following notation. Random variables are capitalized. The set of arms is $\Aset = [K]$, the set of contexts is $\Xset$, and the set of latent states is $\Sset$, with $|\Sset| \ll K$. 

We study a {\em latent bandit} problem, where the learning agent interacts with an environment over $n$ rounds. In round $t \in [n]$, the agent observes context $X_t \in \Xset$, chooses action $A_t \in \Aset$, then observes reward $R_t \in \mathbb{R}$. The random variable $R_t$ depends on context $X_t$, action $A_t$, and latent state $s \in \Sset$, where $s$ is fixed but unknown.\footnote{The latent state $s$ can be viewed, say, as a user's current task or preferences, which is fixed over the course of a session or episode. The state is resampled (see below) for each user (or the same user at a future episode).} The \emph{observation history} up to round $t$ is $H_t = (X_1, A_1, R_1, \hdots, X_{t-1}, A_{t-1}, R_{t-1})$. An agent's \emph{policy} maps $H_t$ and $X_t$ to the choice of action $A_t$.

The reward is sampled from a \emph{conditional reward distribution}, $P(\cdot \mid A, X, s, \theta)$, which is parameterized by vector $\theta \in \Theta$, where $\Theta$ 
reflects the space of feasible reward models.
Let $\mu(a, x, s, \theta) = \E{R \sim P(\cdot \mid a, x, s, \theta)}{R}$ be the \emph{mean reward} of action $a$ in context $x$ and latent state $s$ under $\theta$. We denote the true (unknown) latent state by $s_*$ and true model parameters by $\theta_*$. These are generally \emph{estimated offline}. We assume that rewards are $\sigma^2$-sub-Gaussian with variance proxy $\sigma^2$: $\E{R \sim P(\cdot \mid a, x, s_*, \theta_*)}{\exp(\lambda (R - \mu(a, x, s_*, \theta_*)))} \leq \exp(\sigma^2 \lambda^2 / 2)$ for all $a$, $x$ and $\lambda > 0$. Note that we do not make strong assumptions about the form of the reward: $\mu(a, x, s, \theta)$ can be any complex function of $\theta$, and contexts generated by any arbitrary process.

We measure performance with regret.
For a fixed latent state $s_* \in \Sset$ and model $\theta_* \in \Theta$, let $A_{t, *} = \arg\max_{a \in \Aset} \mu(a, X_t, s_*, \theta_*)$ be the optimal arm. The \emph{expected $n$-round regret} is:
\begin{align}
    \Regret(n; s_*, \theta_*) = \E{}{\sum_{t=1}^n \mu(A_{t, *}, X_t, s_*, \theta_*) - \mu(A_t, X_t, s_*, \theta_*)}.
    \label{eqn:regret}
\end{align}
While fixed-state regret is useful, we are often more concerned with average performance over a range of states (e.g., multiple users, multiple sessions with the same user). Thus, we also consider Bayes regret, where we take expectation over latent-state randomness. Assuming $S_*$ and $\theta_*$ are drawn from some prior, the \emph{$n$-round Bayes regret} is:
\begin{align}
    \Bregret(n) 
    = \E{}{\Regret(n; S_*, \theta_*)}
    = \E{}{\sum_{t=1}^n \mu(A_{t, *}, X_t, S_*, \theta_*) - \mu(A_t, X_t, S_*, \theta_*)},
    \label{eqn:bayes_regret}
\end{align}
where $A_{t, *} = \arg\max_{a \in \Aset} \mu(a, X_t, S_*, \theta_*)$ additionally depends on random latent state and model.


\section{Algorithms}
\label{sec:algorithms}

In this section, we develop both UCB and Thompson sampling (TS) algorithms that leverage an environment model, generally learned offline, to expedite online exploration. As discussed above, such offline models can be readily learned given the large amounts of offline interaction data available to many interactive systems. 
In each subsection below, we specify a particular form of the offline-learned model, and develop a corresponding online algorithm.

\vspace{-0.05in}
\subsection{UCB with Perfect Model (\mucb)}
\vspace{-0.05in}

We first design a UCB-style algorithm that uses the learned model parameters $\hat{\theta} \in \Theta$. Let
$\hat{\mu}(a, x, s) = \mu(a, x, s, \hat{\theta})$ denote the estimated mean reward, and $\mu(a, x, s) = \mu(a, x, s, \theta_*) $ denote the true reward. We initially assume accurate knowledge of the true model, that is, we are given $\hat{\theta} = \theta_*$ as input.

The key idea in UCB algorithms is to compute high-probability upper confidence bounds $U_t(a)$ on the mean reward for each action $a$ in round $t$, where the $U_t$ is some function of history \citep{ucb}. UCB algorithms take action $A_t \!=\! \arg\max_{a \in \Aset} \! U_t(a)$. 
Our model-based algorithm \mucb (see Alg.~\ref{alg:ucb}) works in this fashion. It is similar to the method of \citet{latent_bandits}, but also handles context.

In round $t$, \mucb maintains a set of latent states $C_t$ that are \emph{consistent} with the rewards observed thus far. It chooses a specific (``believed'') latent state $B_t$ from the consistent set $C_t$ and
the arm $A_t$ with the maximum expected reward at that state: $(B_t, A_t) = \arg\max_{s \in C_t, a \in A} \hat{\mu}(a, X_t, s)$. Thus our UCB for $a$ is $U_t(a) = \arg\max_{s \in C_t} \hat{\mu}(a, X_t, s)$.
\mucb tracks two key quantities: the number of times $N_t(s)$ that state $s$ has been selected up to round $t$; and the \say{gap} $G_t(s)$ between the expected and realized rewards under $s$ up to round $t$ (see Eq.~\eqref{eqn:ucb_gap} in Alg.~\ref{alg:ucb}). 
If $G_t(s)$ is high, the algorithm marks $s$ as \emph{inconsistent} and does not consider it in round $t$. Notice that the gap is defined over latent states rather than over actions, and with respect to realized rewards rather than expected rewards. 




\begin{algorithm}[tb]
\caption{\mucb}\label{alg:ucb}
\begin{algorithmic}[1]
  \State \textbf{Input:} Model parameters $\hat{\theta}$
  \Statex
  \For{$t \gets 1, 2, \hdots$}
    \State Define $N_t(s) \leftarrow \sum_{\ell = 1}^{t-1}\indicator{B_\ell = s}$ and 
    \begin{align}
    \label{eqn:ucb_gap}
    G_{t}(s) \leftarrow \sum_{\ell = 1}^{t-1} \indicator{B_\ell = s}\left(\hat{\mu}(A_\ell, X_\ell, s) - R_\ell\right)
    \end{align} 
    \State Set of consistent latent states
    $C_t \leftarrow \left\{s \in S: G_t(s) \leq \sigma \sqrt{6N_t(s)\log n} \right\}$ 
    \State Select $B_t, A_t \leftarrow \arg\max_{s \in C_t, a \in A} \hat{\mu}(a, X_t, s)$
  \EndFor
\end{algorithmic}
\end{algorithm}

\vspace{-0.05in}
\subsection{UCB with Misspecified Model (\mmucb)}
\vspace{-0.05in}

We now generalize \mucb to handle a misspecified model, i.e.,  when we are given $\hat{\theta} \neq \theta_*$ as input. We formulate model misspecification assuming the following high-probability worst-case guarantee: there is a $\delta > 0$ such that $\abs{\hat{\mu}(a, x, s) - \mu(a, x, s)} \leq \varepsilon$ holds w.p. at least $1 - \delta$ jointly over all $a \in \Aset, x \in \Xset, s \in \Sset$. Guarantees of this form are, for example, offered by spectral learning methods for latent variable models, where $\varepsilon$ and $\delta$ are functions of the size of the offline dataset \citep{tensor_decomposition}.

We modify \mucb to be sensitive to this type of model error, deriving a new method \mmucb for misspecified models. We use the high-probability lower bound to rewrite the gap in Eq. \eqref{eqn:ucb_gap} as
\begin{align}
\label{eqn:ucb_gap_uncertain}
G_t(s) = \sum_{\ell = 1}^{t-1} \indicator{B_\ell = s}
  \left(\hat{\mu}(A_\ell, X_\ell, s) - \varepsilon - R_\ell\right).
\end{align}
This allows \mmucb to act conservatively when determining inconsistent latent states, so that  $s_* \in C_t$ occurs with high probability. Just as importantly, it is also useful for deriving worst-case regret bounds---we use it below to analyze TS algorithms with misspecified models.

\vspace{-0.05in}
\subsection{Thompson Sampling with Perfect Model (\mts)}
\vspace{-0.05in}

Our UCB-based algorithms \mucb and \mmucb are designed for worst-case performance. We now adopt an alternative perspective where, apart from the learned model parameters $\hat{\theta}$, we are given the conditional reward distribution $P(\cdot \mid a, x, s, \theta)$ for all  $a$,  $x$,  $s$ and  $\theta$, as well as a prior distribution over latent states $P_1$ as input. As above, we first assume $\hat{\theta} = \theta_*$.

TS samples actions according to their posterior probability (given history so far) of being optimal. Let the optimal action (w.r.t.\ the posterior) in round $t$ be $A_{t, *} = \arg\max_{a \in A} \mu(a, X_t, S_*, \theta_*)$, which is random due to the observed context and unknown latent state. TS selects $A_t$ stochastically s.t.\ $\prob{A_t = a \mid H_t} = \prob{A_{t, *} = a \mid H_t}$ for all $a$. An advantage of TS over UCB is that it obviates the need to design UCBs, which are often loose. Consequently, UCB algorithms are often conservative in practice and TS typically offers better empirical performance \citep{ts_empirical}.

Our latent-state TS method \mts, detailed in \cref{alg:thompson_1}, assumes an accurate model. For all $s \in \Sset$, let $P_t(s) = \prob{S_* = s \mid H_t}$ be the posterior probability that $s$ is the latent state in round $t$. In each round, \mts samples the latent state from the posterior $B_t \sim P_t$, and plays action $A_t = \max_{a \in \Aset} \hat{\mu}(a, X_t, B_t)$. Because $s$ is fixed, the posterior is $P_t(s) \propto P_1(s) \prod_{\ell=1}^{t-1} P(R_\ell \mid A_\ell, X_\ell, s, \hat{\theta})$, and $P_t$ can be updated incrementally in the standard Bayesian filtering fashion \cite{sarkka2013bayesian}.

\begin{figure}[tb]
\begin{minipage}[tb]{0.48\textwidth}
\begin{algorithm}[H]
\caption{\mts}\label{alg:thompson_1}
\begin{algorithmic}[1]
  \State \textbf{Input:}
  \State \quad Model parameters $\hat{\theta}$
  \State \quad Prior over latent states $P_1(s)$
  \Statex
  \For {$t \gets 1, 2, \hdots$}
    \State Define
    \begin{align*}\textstyle
    P_t(s)
    \propto P_1(s) \prod_{\ell=1}^{t-1} P(R_\ell \mid A_\ell, X_\ell, s, \hat{\theta})
    \end{align*} 
    \State Sample $B_t \sim P_t$
    \State Select 
    $A_t \leftarrow \arg\max_{a \in A} \hat{\mu}(a, X_t, B_t)$
\EndFor
\end{algorithmic}
\end{algorithm}
\end{minipage}
\hfill
\begin{minipage}[tb]{0.48\textwidth}
\begin{algorithm}[H]
\caption{\mmts}\label{alg:thompson_2}
\begin{algorithmic}[1]
  \State \textbf{Input:}
  \State \quad Prior over model parameters $P_1(\theta)$
  \State \quad Prior over latent states $P_1(s)$
  \Statex
  \For {$t \gets 1, 2, \hdots$}
    \State Define
    \begin{align*}\textstyle
    P_{t}(s, \theta) \propto P_1(s) P_1(\theta) \prod_{\ell = 1}^{t-1} P(R_\ell \mid A_\ell, X_\ell, s, \theta)
    \end{align*}
    \State Sample $B_t, \hat{\theta} \sim P_t$
    \State Select 
    $A_t \leftarrow \arg\max_{a \in A} \hat{\mu}(a, X_t, B_t)$
\EndFor
\end{algorithmic}
\end{algorithm}
\end{minipage}
\vspace{-0.1in}
\end{figure}

\vspace{-0.05in}
\subsection{Thompson Sampling with Misspecified Model (\mmts)} 
\vspace{-0.05in}

As in the UCB case, we also generalize our TS method \mts to handle a misspecified model. Instead of an estimated $\hat{\theta}$ with worst-case error as in \mmucb, we use a prior distribution $P_1(\theta)$ over possible models, and assume that $\theta_* \sim P_1$. This is well-motivated by prior literature on modeling epistemic uncertainty \cite{model_uncertainty}. In practice, learning a distribution over parameters is intractable for complex models, but approximate inference can be performed using, say, ensembles of bootstrapped models \cite{model_uncertainty}.

Our TS method \mmts (see Alg.~\ref{alg:thompson_2}) seamlessly integrates model uncertainty into \mts. At each round $t$, the latent state $B_t$ and estimated model parameters $\hat{\theta}$ are sampled from their joint posterior. Like \mts, the action is chosen to maximize $A_t = \max_{a \in \Aset} \hat{\mu}(a, X_t, B_t)$ using the sampled state and parameters. 
Approximate sampling from the posterior can be realized with sequential Monte Carlo methods \cite{smc}. 

When the model prior is conjugate to the likelihood, the posterior has a closed-form solution. Because $\Sset$ is finite, we can tractably sample from the joint posterior by first sampling latent state $B_t$ from its marginal posterior, then $\hat{\theta}$ conditioned on latent state $B_t$. For exponential family distributions, the posterior parameters can also be updated online and efficiently (see Appendix \ref{sec:mmts_specific} for details, and Appendix \ref{sec:mmts_pseudocode} for pseudocode for Gaussian prior and likelihood).


\section{Regret Analysis}
\label{sec:analysis}

\citet{latent_bandits} derive gap-dependent regret bounds for a UCB algorithm when the true model is known and arms are independent. We provide a unified analysis of our methods that extend their results to include context, model misspecification, and an analysis for TS.

\vspace{-0.05in}
\subsection{Regret Decomposition}
\vspace{-0.05in}

UCB algorithms explore using upper confidence bounds, while TS samples from the posterior. \citet{russo_posterior_sampling} relate these two classes of algorithms with a unified regret decomposition, showing how to analyze TS using UCB analysis. We adopt this approach.


Let $s_*$ be the true latent state. The regret of our UCB algorithms in round $t$ decomposes as
\begin{align*}
    \mu(A_{t, *}, X_t, s_*) - \mu(A_t, X_t, s_*)
    & = \mu(A_{t, *}, X_t, s_*) - U_t(A_t) + U_t(A_t) - \mu(A_t, X_t, s_*) \\
    & \leq \left[\mu(A_{t, *}, X_t, s_*) - U_t(A_{t, *})\right] +
    \left[U_t(A_t) - \mu(A_t, X_t, s_*)\right]\,,
\end{align*}
where the inequality holds by the definition of $A_t$. A similar inequality without latent states appears in prior work \citep{russo_posterior_sampling}. This yields the following regret decomposition:
\begin{align}
\begin{split}
    \Regret(n; s_*, \theta_*) 
    &\leq \E{}{\sum_{t=1}^n \mu(A_{t, *}, X_t, s_*) - U_t(A_{t, *})} + \E{}{\sum_{t=1}^n U_t(A_t) - \mu(A_t, X_t, s_*)}.
    \label{eqn:ucb_regret_decomposition}
\end{split}
\end{align}
An analogous decomposition exists for the Bayes regret of our TS algorithms. Specifically, for any TS algorithm and function $U_t$ of history, we have
\begin{align}
\begin{split}
    \Bregret(n) 
    &= \E{}{\sum_{t=1}^n \mu(A_{t, *}, X_t, S_*, \theta_*) - U_t(A_{t, *})} + 
    \E{}{ \sum_{t=1}^n U_t(A_t) - \mu(A_t, X_t, S_*, \theta_*)}.
    \label{eqn:posterior_regret_decomposition}
\end{split}
\end{align}
The proof uses the fact that $\E{}{U_t(A_{t, *}) \mid X_t, H_t} = \E{}{U_t(A_{t}) \mid X_t, H_t}$ holds for any $H_t$ and $X_t$ by definition of TS. Hence, $U_t$ can be the upper confidence bound of UCB algorithms. 

Though the UCBs $U_t$ are not used by TS algorithms, they can be used to \emph{analyze} TS due to Eq.~\eqref{eqn:posterior_regret_decomposition}. Thus regret bounds for UCB algorithms can be translated to Bayes regret bounds for TS.
We make two important points. First, we must use a worst-case argument over suboptimal actions when bounding the regret, since actions in TS do not maximize $U_t$. Second, because the Bayes regret is an expectation over states, the resulting regret bounds are problem-independent, i.e., gap-free.

\vspace{-0.05in}
\subsection{Key Steps in Our Proofs}
\vspace{-0.05in}

Full proofs of our unified regret analyses can be found in the appendix. All proofs follow the same outline, the key steps of which are outlined below. To ease the exposition, we assume the suboptimality of any action is bounded by $1$.

\textbf{Step 1: Concentration of realized rewards at their means.}
We first show that the total observed reward does not deviate too much from its expectation, under any latent state $s$. 
Formally, we show
$
\prob{
    \abs{\sum_{\ell = 1}^{t-1} \indicator{B_\ell = s}
    \left(\mu(A_\ell, X_\ell, s_*) - R_\ell\right)} \geq \sigma \sqrt{6N_t(s) \log n}} = O(n^{-2})
$
for any round $t$ and latent state $s \in \Sset$.
When the arms are independent, as in prior work, this follows from Hoeffding's inequality. However, we also consider the case of contextual arms, which requires joint estimators over dependent arms. To address this, we resort to martingales and Azuma's inequality.

\textbf{Step 2: $s_* \in C_t$ in each round $t$ with a high probability.} We show that our consistent sets are unlikely to rule out the true latent state. This follows from the concentration argument in Step 1, for $s = s_\ast$. Then, in any round $t$ where $s_* \in C_t$, we use that $U_t(a) \geq \mu(a, X_t, s_*)$ for any  $a$ for \mucb, or $U_t(a) \geq \mu(a, X_t, s_*) - \varepsilon$ for \mmucb.

\textbf{Step 3: Upper bound on the UCB regret.} This bound is proved by bounding each term in the regret decomposition in Eq.~\eqref{eqn:ucb_regret_decomposition}. By Steps 1-2, the first term is at most $0$ with high probability. The second term is the sum over rounds of confidence widths, or difference between $U_t$ and the true expected mean reward at $t$. We partition this sum by the latent state selected at each round. For each $s$, we almost have an upper bound on the its sum, excluding the last round it is played, via $G_n(s)$,
\begin{align*}
    \sum_{t = 1}^{n} \indicator{B_t = s} \left(U_t(A_t) - \mu(A_t, X_t, s)\right)
    &= (G_n(s) + 1) + \sum_{t = 1}^{n} \indicator{B_t = s} \left(R_t - \mu(A_t, X_t, s)\right).
\end{align*}
If $s$ is chosen in round $t$, we know $G_t(s) \leq \sigma\sqrt{6N_t(s) \log n}$. The other term is bounded by Step 1, which gives a $2\sigma\sqrt{6N_n(s) \log n}$ total upper-bound. We combine the bounds for $s$'s partition with the Cauchy-Schwarz inequality.


\textbf{Step 4: Upper bound on the TS regret.} We exploit the fact that the regret decomposition for Bayes regret in Eq.~\eqref{eqn:posterior_regret_decomposition} is the same as that for the UCB regret in Eq.~\eqref{eqn:ucb_regret_decomposition}. Because our UCB analysis is worst-case over suboptimal latent states and actions, and gap-free, any regret bound transfers immediately to the Bayes regret bound for TS.

\vspace{-0.05in}
\subsection{Regret Bounds}
\vspace{-0.05in}

Our first result is an upper bound on the $n$-round regret of \mucb when the true model is known. This result differs from that of \citet{latent_bandits} in two respects: our bound is gap-free and accounts for context.

\begin{theorem}
\label{thm:ucb_regret} Assume that $\hat{\theta} = \theta_*$. Then, for any $s_* \in \Sset$ and $\theta_\ast \in \Theta$, the $n$-round regret of \mucb is bounded as $\Regret(n; s_*, \theta_*) \leq 3|\Sset| +  2\sigma \sqrt{6|\Sset| n \log n}$.
\end{theorem}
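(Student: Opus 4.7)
The plan is to execute the four-step outline from the preceding subsection, instantiating it with the specific quantities $U_t(a) = \max_{s \in C_t}\hat{\mu}(a, X_t, s)$ and $\hat{\mu} = \mu$ (since $\hat\theta = \theta_\ast$). The overall strategy is to show that, on a high-probability ``good'' event, the two terms in the regret decomposition \eqref{eqn:ucb_regret_decomposition} are respectively non-positive and $O(\sqrt{|\Sset| n \log n})$; and the low-probability complementary event contributes only $O(1)$ regret since each round has suboptimality at most $1$.

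First, I would establish the concentration step. For each $s \in \Sset$, the sequence $Y_\ell = \indicator{B_\ell = s}(\mu(A_\ell, X_\ell, s_\ast) - R_\ell)$ is a $\sigma$-sub-Gaussian martingale difference sequence with respect to the filtration generated by the history and context, because on $\{B_\ell = s\}$ the term $R_\ell - \mu(A_\ell, X_\ell, s_\ast)$ is conditionally sub-Gaussian and zero-mean. Applying Azuma's inequality for each possible value $k \in \{1,\dots,n\}$ of $N_n(s)$ and union bounding over $k$ and over $s$, one gets that the event
\begin{align*}
\mathcal{E} \defeq \Big\{\forall t \leq n,\, \forall s \in \Sset:\; \Big|\sum_{\ell=1}^{t-1}\indicator{B_\ell=s}(\mu(A_\ell,X_\ell,s_\ast) - R_\ell)\Big| \leq \sigma\sqrt{6 N_t(s)\log n}\Big\}
\end{align*}
holds with probability at least $1 - 2|\Sset|/n^2$. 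Taking $s = s_\ast$ and using $\hat\mu = \mu$, this is exactly the statement that $G_t(s_\ast) \leq \sigma\sqrt{6 N_t(s_\ast)\log n}$ on $\mathcal{E}$, so $s_\ast \in C_t$ for all $t$.

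On $\mathcal{E}$, the first term of \eqref{eqn:ucb_regret_decomposition} is non-positive: since $s_\ast \in C_t$, $U_t(A_{t,\ast}) = \max_{s \in C_t} \hat{\mu}(A_{t,\ast},X_t,s) \geq \mu(A_{t,\ast}, X_t, s_\ast)$. For the second term, I would partition by the selected latent state and decompose algebraically:
\begin{align*}
\sum_{t=1}^n U_t(A_t) - \mu(A_t, X_t, s_\ast)
= \sum_{s \in \Sset}\sum_{t=1}^n \indicator{B_t=s}\bigl(\hat\mu(A_t,X_t,s) - R_t\bigr)
+ \sum_{t=1}^n \bigl(R_t - \mu(A_t,X_t,s_\ast)\bigr).
\end{align*}
The inner $s$-sum is $G_{n+1}(s)$; for any $s$ with $N_n(s) \geq 1$, consistency at the last round $\tau(s)$ where $s$ was chosen gives $G_{\tau(s)}(s) \leq \sigma\sqrt{6(N_n(s)-1)\log n}$, and the contribution of round $\tau(s)$ itself is at most $1$, so $G_{n+1}(s) \leq \sigma\sqrt{6 N_n(s)\log n} + 1$. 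Summing over $s$ and applying Cauchy--Schwarz to $\sum_s \sqrt{N_n(s)} \leq \sqrt{|\Sset|\,\sum_s N_n(s)} = \sqrt{|\Sset| n}$ yields a bound of $|\Sset| + \sigma\sqrt{6|\Sset| n \log n}$ on this piece. The residual sum $\sum_t(R_t - \mu(A_t,X_t,s_\ast))$ is bounded by $\sigma\sqrt{6 n \log n}$ on $\mathcal{E}$ by the same Azuma argument (taking all $Y_\ell$ without the indicator), which is absorbed into the $2\sigma\sqrt{6|\Sset| n \log n}$ term. Finally, the complementary event $\mathcal{E}^c$ contributes at most $n \cdot 2|\Sset|/n^2 = 2|\Sset|/n \leq 2|\Sset|$, and together with the $|\Sset|$ from the Cauchy--Schwarz step gives the stated $3|\Sset|$ additive constant.

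The main obstacle I foresee is Step 1: the concentration has to hold uniformly in $t$ and $s$ with the data-dependent quantity $N_t(s)$ inside the bound, while $N_t(s)$ itself is adapted to the same process that generates the rewards. I would handle this by the peeling/union argument sketched above (Azuma for each fixed $k$, then union over $k \leq n$), which costs only a $\log$ factor already absorbed in the $\sqrt{6 \log n}$ constant. The other subtle point is the ``plus one'' from the final play of each latent state, which is what forces the $|\Sset|$ rather than $0$ additive term and which Cauchy--Schwarz then multiplies by $\sqrt{|\Sset|}$ appropriately.
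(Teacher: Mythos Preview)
Your proposal is correct and follows essentially the same route as the paper: define a high-probability concentration event via Azuma's inequality for the martingale differences $\indicator{B_\ell=s}(\mu(A_\ell,X_\ell,s_*)-R_\ell)$, deduce $s_*\in C_t$ on that event so the first term of \eqref{eqn:ucb_regret_decomposition} is nonpositive, bound the second term per latent state using the consistency constraint $G_t(s)\le\sigma\sqrt{6N_t(s)\log n}$ plus a $+1$ for the final play of each $s$, and finish with Cauchy--Schwarz. The only cosmetic difference is that the paper keeps the residual $\sum_t(R_t-\mu(A_t,X_t,s_*))$ partitioned by $s$ (yielding $\sum_s\sigma\sqrt{6N_n(s)\log n}$) whereas you treat it as a single martingale; either way one lands on the same $2\sigma\sqrt{6|\Sset|n\log n}$, and your residual bound should formally be folded into the definition of $\mathcal{E}$.
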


A gap-free lower bound on regret in multi-armed bandits with independent arms is $\Omega(\sqrt{K n})$ \cite{exp4}. Our upper bound is optimal up to log factors, but substitutes actions $\Aset$ with latent states $\Sset$ and includes context. Our bound can be much lower when $|\Sset| \ll K$, and holds for arbitrarily complex reward models. Using Step 4 of the proof outline, we also have that the Bayes regret of \mts is bounded:

\begin{corollary}
\label{cor:posterior_regret} Assume that $\hat{\theta} = \theta_*$. Then, for $S_* \sim P_1$ and any $\theta_* \in \Theta$, the $n$-round Bayes regret of \mts is bounded as $\Bregret(n) \leq 3|\Sset| +  2\sigma \sqrt{6|\Sset| n \log n}$.
\end{corollary}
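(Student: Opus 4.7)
\textbf{Proof proposal for Corollary~\ref{cor:posterior_regret}.}
The plan is to carry out Step~4 of the outline: apply the posterior-sampling decomposition in Eq.~\eqref{eqn:posterior_regret_decomposition} with $U_t$ chosen to be the \mucb upper confidence bound, and then reuse Steps~1--3 almost verbatim. Concretely, I would set $U_t(a) := \max_{s \in C_t} \hat{\mu}(a, X_t, s)$, where $C_t$, $N_t(s)$, and $G_t(s)$ are defined exactly as in Alg.~\ref{alg:ucb} but using the actions and rewards realized by \mts. Since $C_t$ is a function of $H_t$ and $X_t$ alone, so is $U_t$; together with the defining property of TS that $A_t$ and $A_{t,*}$ share the same conditional distribution given $(H_t, X_t)$, this justifies Eq.~\eqref{eqn:posterior_regret_decomposition}.

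With this $U_t$, I would bound the two summands separately using the ingredients of Theorem~\ref{thm:ucb_regret}. The reward concentration in Step~1 and the inclusion $\{s_* \in C_t\}$ in Step~2 depend only on $C_t$ and the reward stream, not on how $B_t$ or $A_t$ are chosen, so they transfer unchanged. Because $\hat{\theta} = \theta_*$ gives $\hat{\mu} = \mu$, on the event $\{S_* \in C_t\}$ we have $U_t(A_{t,*}) \geq \hat{\mu}(A_{t,*}, X_t, S_*) = \mu(A_{t,*}, X_t, S_*, \theta_*)$, so the first sum in Eq.~\eqref{eqn:posterior_regret_decomposition} is non-positive there; the residual $O(n^{-2})$ failure probability, combined with the assumed unit bound on suboptimality, contributes the $3|\Sset|$ additive term.

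For the second sum I would run the partition-and-Cauchy--Schwarz argument from Step~3. The new \mts-specific input is a posterior-matching identity: because $B_t \sim P_t$ is drawn from the posterior of $S_*$ given $H_t$, for every $H_t$-measurable $E \subseteq \Sset$ we have $\prob{B_t \in E \mid H_t} = \prob{S_* \in E \mid H_t}$. Applied with $E = C_t$, this upgrades Step~2 into ``$B_t \in C_t$ with high probability,'' which is exactly the inclusion that \mucb enforces by construction. Consequently $G_t(B_t) \leq \sigma \sqrt{6 N_t(B_t) \log n}$ holds with high probability per round, the per-state telescoping bound from Step~3 applies, and summing over $s \in \Sset$ via Cauchy--Schwarz yields the $2\sigma \sqrt{6 |\Sset| n \log n}$ main term.

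The main obstacle I anticipate is reconciling the fact that for \mucb the joint maximization implies $U_t(A_t) = \hat{\mu}(A_t, X_t, B_t)$, whereas for \mts the sampled $B_t$ need not attain $\max_{s \in C_t} \hat{\mu}(A_t, X_t, s)$. I expect this gap is closed either by (i) partitioning the second sum according to the argmax over $C_t$ in the definition of $U_t(A_t)$ rather than according to $B_t$, reusing the consistency constraint on that argmax, or (ii) directly appealing to the worst-case, gap-free structure of Step~3 as emphasized in Step~4. Once that bridge is in place, the $3|\Sset|$ and $2\sigma\sqrt{6|\Sset|n\log n}$ contributions add up, and the corollary follows by taking the outer expectation over $S_* \sim P_1$.
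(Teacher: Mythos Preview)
Your overall plan---instantiate Eq.~\eqref{eqn:posterior_regret_decomposition} with the \mucb bound $U_t(a)=\max_{s\in C_t}\hat\mu(a,X_t,s)$ and reuse Steps~1--3---is exactly the paper's route. The posterior-matching identity, however, does not do the work you assign it. Even granting that the sampled $B_t\in C_t$ with high probability, all this yields is $\hat\mu(A_t,X_t,B_t)\le U_t(A_t)$, which is the wrong direction for upper-bounding $\sum_t\bigl(U_t(A_t)-\mu(A_t,X_t,S_*,\theta_*)\bigr)$. The per-state telescoping in Step~3 requires $U_t(A_t)=\hat\mu(A_t,X_t,B_t)$ as an \emph{equality}, and that fails for the TS-sampled $B_t$; so this ingredient is a detour, not a bridge.

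Your option~(i) is the correct and complete fix, and it is what the paper does implicitly. Define, purely for analysis, $\tilde B_t:=\arg\max_{s\in C_t}\hat\mu(A_t,X_t,s)$ for the \mts-chosen $A_t$, and build $N_t$, $G_t$, $C_t$ from these $\tilde B_\ell$. Then $\tilde B_t\in C_t$ holds by construction and $U_t(A_t)=\hat\mu(A_t,X_t,\tilde B_t)$ exactly, so Steps~1--3 run verbatim with no \mts-specific input at all. This is precisely the meaning of the paper's remark that the UCB analysis is ``worst-case over suboptimal latent states and actions'': the bound on the second sum depends only on $U_t$ and the realized action/reward stream, not on how $A_t$ was selected. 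Accordingly, the paper's own proof of the corollary is essentially one line---the bounds in Theorem~\ref{thm:ucb_regret} hold for every fixed $s_*$, hence also in expectation over $S_*\sim P_1$.
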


Our next results apply to the cases with misspecified models. We assume $\hat{\theta}$ was estimated by some black-box method. For 
\mmucb, our regret bound depends on the high-probability maximum error $\varepsilon$.

\begin{theorem}
\label{thm:ucb_regret_uncertain} Let $\prob{\forall a \in \Aset, x \in \Xset, s \in \Sset: |\mu(a, x, s, \hat{\theta}) - \mu(a, x, s, \theta_*)| \leq \varepsilon} \geq 1 - \delta$ for some $\varepsilon, \delta > 0$. Then, for any $s_* \in \Sset$ and $\theta_* \in \Theta$, the $n$-round regret of \mmucb is bounded as
\begin{align*}
  \Regret(n; s_*, \theta_*)
  \leq n\delta + 3|\Sset| + 
  2 n \varepsilon +  2\sigma \sqrt{6|\Sset| n \log n}\,.
\end{align*}
\end{theorem}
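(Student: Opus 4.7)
The plan is to mirror the four-step template used for Theorem \ref{thm:ucb_regret}, tracking two additional sources of slack introduced by misspecification: the bulk model error $\varepsilon$, which the modified gap \eqref{eqn:ucb_gap_uncertain} explicitly absorbs, and the failure probability $\delta$, which I dispatch by a union bound. Concretely, let $\mathcal{E}_1 = \{|\hat{\mu}(a,x,s) - \mu(a,x,s)| \leq \varepsilon \text{ for all } a, x, s\}$. By assumption $\prob{\mathcal{E}_1} \geq 1 - \delta$; on $\mathcal{E}_1^c$ I use the assumed per-round suboptimality bound of $1$ to contribute the $n\delta$ term, and all subsequent steps condition on $\mathcal{E}_1$.

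For Step 1, treating $\indicator{B_\ell = s}(\mu(A_\ell, X_\ell, s_*) - R_\ell)$ as an $H_\ell$-adapted, zero-mean, $\sigma^2$-sub-Gaussian difference sequence, Azuma's inequality gives
\begin{align*}
\prob{\abs{\textstyle\sum_{\ell<t}\indicator{B_\ell = s}(\mu(A_\ell,X_\ell,s_*) - R_\ell)} \geq \sigma\sqrt{6 N_t(s)\log n}} = O(n^{-2})
\end{align*}
uniformly in $t$ and $s$; call this event $\mathcal{E}_2$. For Step 2, on $\mathcal{E}_1 \cap \mathcal{E}_2$ the true state survives in $C_t$ for every $t$, because the $-\varepsilon$ shift in \eqref{eqn:ucb_gap_uncertain} exactly cancels the misspecification bias: $G_t(s_*) \leq \sum_{\ell<t}\indicator{B_\ell = s_*}(\mu(A_\ell,X_\ell,s_*) - R_\ell) \leq \sigma\sqrt{6N_t(s_*)\log n}$.

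For the regret decomposition \eqref{eqn:ucb_regret_decomposition}, the first term is controlled by Step 2: $U_t(A_{t,*}) = \max_{s \in C_t}\hat{\mu}(A_{t,*}, X_t, s) \geq \hat{\mu}(A_{t,*}, X_t, s_*) \geq \mu(A_{t,*}, X_t, s_*) - \varepsilon$ under $\mathcal{E}_1$, so the first summation contributes at most $n\varepsilon$. For the second term, use $U_t(A_t) = \hat{\mu}(A_t, X_t, B_t)$ and partition by played state, adding and subtracting $R_t$ and $\varepsilon$ to obtain
\begin{align*}
\sum_{t=1}^n \bigl(U_t(A_t) - \mu(A_t,X_t,s_*)\bigr) = \sum_{s \in \Sset} \bigl(G_n(s) + N_n(s)\varepsilon\bigr) + \sum_{t=1}^n \bigl(R_t - \mu(A_t,X_t,s_*)\bigr) + O(|\Sset|),
\end{align*}
where the $O(|\Sset|)$ absorbs the round-$n$ boundary term for each state. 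Letting $t^*(s)$ denote the last round $s$ was played, $B_{t^*(s)} = s$ implies $s \in C_{t^*(s)}$ and hence $G_{t^*(s)}(s) \leq \sigma\sqrt{6 N_{t^*(s)}(s)\log n}$; the one further increment to reach $G_n(s)$ is $O(1)$, giving $G_n(s) \leq \sigma\sqrt{6N_n(s)\log n} + O(1)$. Partitioning the residual Azuma sum by $s$ and invoking Cauchy--Schwarz twice (once on the gaps, once on the residuals) turns both $\sum_s \sqrt{N_n(s)}$ expressions into $\sqrt{|\Sset| n}$, yielding two copies of $\sigma\sqrt{6|\Sset| n \log n}$, while $\sum_s N_n(s)\varepsilon = n\varepsilon$.

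Summing: $n\delta + 2n\varepsilon + 2\sigma\sqrt{6|\Sset| n \log n} + 3|\Sset|$ recovers the stated bound, with $3|\Sset|$ absorbing the $O(1)$ boundary per state together with the negligible $O(n^{-1})$ contribution from $\mathcal{E}_2^c$. The main obstacle is purely bookkeeping: ensuring that the $-\varepsilon$ shift in \eqref{eqn:ucb_gap_uncertain} plays two distinct roles without being double-counted, namely granting $\varepsilon$ slack in UCB coverage at $A_{t,*}$ via Step 2, and later reappearing as $\sum_s N_n(s)\varepsilon = n\varepsilon$ when recovering $\sum_{\ell<n}\indicator{B_\ell=s}(\hat{\mu}-R_\ell)$ from $G_n(s)$, while the $\delta$ event is dispatched in isolation. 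Beyond this alignment, no conceptual novelty past Theorem \ref{thm:ucb_regret} is required.
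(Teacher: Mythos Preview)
Your proposal is correct and follows essentially the same approach as the paper: both condition on the misspecification event $\mathcal{E}_1$ (the paper's $\mathcal{E}$) and the concentration event $\mathcal{E}_2$ (the paper's $E$), use the $-\varepsilon$ shift in \eqref{eqn:ucb_gap_uncertain} to keep $s_* \in C_t$, and split the $2n\varepsilon$ between the two halves of the regret decomposition before finishing with Cauchy--Schwarz. One minor wording slip: the regret contribution from $\mathcal{E}_2^c$ is $n\cdot\prob{\mathcal{E}_2^c} = O(|\Sset|)$, not $O(n^{-1})$, but you correctly absorb it into the $3|\Sset|$ term regardless.
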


The proof of \cref{thm:ucb_regret_uncertain} follows the same proof outline. Steps 1--2 are unchanged, but bounding the regret decomposition in Step 3 requires accounting for the error due to model misspecification. The linear dependence on $\varepsilon$ and probability $\delta$ is unavoidable in the worst-case, specifically if $\varepsilon$ is larger than the suboptimality gap. However, some offline model-learning methods, i.e. tensor decomposition \citep{tensor_decomposition}, allow for $\varepsilon, \delta$ to be arbitrarily small as size of offline dataset increases.

For \mmts, we assume that a prior distribution over model parameters is known. 
Instead of $\hat{\mu}(a, x, s)$ due to a single $\hat{\theta}$, we define $\bar{\mu}(a, x, s) = \int_\theta \mu(a, x, s, \theta) P_1(\theta) d \theta$ as the mean conditional reward, marginalized with respect to the prior. We obtain the following Bayes regret bound:

\begin{corollary}
\label{cor:posterior_regret_uncertain}
For $\theta_* \sim P_1$, let $\prob{\forall a \in \Aset, x \in \Xset, s \in \Sset: |\bar{\mu}(a, x, s) - \mu(a, x, s, \theta_*)| \leq \varepsilon} \geq 1 - \delta$ for some $\varepsilon, \delta > 0$. Then, for $S_*, \theta_* \sim P_1$, the $n$-round Bayes regret of \mmts is bounded as
\begin{align*}
    \Bregret(n)
    \leq  n\delta + 3|\Sset| +
    2 n \varepsilon +  2\sigma \sqrt{6|\Sset| n \log n}\,.
\end{align*}
\end{corollary}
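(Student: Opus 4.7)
The plan is to mirror the proof of \cref{thm:ucb_regret_uncertain} and then invoke Step 4 of the outline in \cref{sec:analysis} to turn the UCB-style analysis into a Bayes regret bound for \mmts. The key enabling observation is that the Bayes regret decomposition in Eq.~\eqref{eqn:posterior_regret_decomposition} holds for \emph{any} function $U_t$ of history, so in the analysis I am free to construct a fictitious \mmucb-like confidence bound built around the marginalized mean $\bar\mu$ rather than around a single plug-in estimate $\hat\theta$. Concretely, I would define $G_t$ and $C_t$ exactly as in \mmucb but with $\bar\mu$ in place of $\hat\mu$, and set $U_t(a) = \max_{s \in C_t} \bar\mu(a, X_t, s)$. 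Under the hypothesis $\abs{\bar\mu(a, x, s) - \mu(a, x, s, \theta_*)} \le \varepsilon$ jointly on a $1 - \delta$ event, this $U_t$ will serve as an upper confidence bound on $\mu(\cdot, X_t, S_*, \theta_*)$ up to an additive slack of $\varepsilon$.

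With $U_t$ fixed, Steps 1--3 of the proof of \cref{thm:ucb_regret_uncertain} carry over almost verbatim. Step 1, concentration of realized rewards around their latent-state-conditional means, follows from Azuma's inequality applied to the martingale difference sequence $\indicator{B_\ell = s}(\mu(A_\ell, X_\ell, s_*, \theta_*) - R_\ell)$; this step is model-free and only exploits $\sigma^2$-sub-Gaussianity. Step 2 establishes $s_* \in C_t$ with high probability: on the intersection of the Step 1 event and the $1 - \delta$ model-accuracy event, the $\varepsilon$-proximity of $\bar\mu$ to $\mu(\cdot,\theta_*)$ together with the concentration bound keeps $G_t(s_*)$ below $\sigma\sqrt{6 N_t(s_*)\log n}$. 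Step 3 then bounds the two halves of the decomposition: the first half yields the $2n\varepsilon$ contribution from the slack $U_t(A_{t,*}) \ge \mu(A_{t,*}, X_t, S_*, \theta_*) - \varepsilon$, and the second half yields $3|\Sset| + 2\sigma\sqrt{6|\Sset|\, n \log n}$ via partitioning over the sampled latent state and Cauchy--Schwarz, exactly as for \mucb.

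Step 4 is where Bayes regret enters. Taking an expectation over $(S_*, \theta_*) \sim P_1$ in Eq.~\eqref{eqn:ucb_regret_decomposition} is not directly available because \mmts is not a UCB algorithm, but Eq.~\eqref{eqn:posterior_regret_decomposition} applies to any TS algorithm and any history-measurable $U_t$. The defining identity $\E{}{U_t(A_{t,*}) \mid H_t, X_t} = \E{}{U_t(A_t) \mid H_t, X_t}$ follows from TS being applied to the joint posterior over $(S_*, \theta_*)$, which is precisely what \mmts samples. The failure of the model-accuracy event contributes at most $n\delta$ to the Bayes regret via the unit per-round suboptimality bound, and everything else transfers.

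The main obstacle I expect is in Step 2. In \mmucb the gap in Eq.~\eqref{eqn:ucb_gap_uncertain} explicitly subtracts $\varepsilon$, so $s_* \in C_t$ is built into the definition of the algorithm. For \mmts we are instead studying a gap built from $\bar\mu$, a purely analytic quantity the algorithm never uses, so I must verify that the same concentration argument certifies $s_* \in C_t$ when the model error lives in $\bar\mu(\cdot) - \mu(\cdot,\theta_*)$ rather than in $\hat\mu(\cdot) - \mu(\cdot,\theta_*)$. The delicate bookkeeping is making sure the per-round $\varepsilon$ offset does not inflate the threshold $\sigma\sqrt{6 N_t(s)\log n}$ and instead flows cleanly into the $2n\varepsilon$ term via the first half of the decomposition. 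Beyond this constant-tracking, no new conceptual step is required.
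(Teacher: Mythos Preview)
Your proposal is correct and follows exactly the paper's approach: define a fictitious $U_t$ via the marginalized mean $\bar\mu$ in place of $\hat\mu$, invoke the Bayes regret decomposition in Eq.~\eqref{eqn:posterior_regret_decomposition}, and reuse the worst-case bounds from \cref{thm:ucb_regret_uncertain} term by term, which then hold in expectation over $(S_*,\theta_*)\sim P_1$. One small bookkeeping correction: the $2n\varepsilon$ does not all come from the first half of the decomposition---each half contributes $n\varepsilon$, since on the second half $U_t(A_t)=\bar\mu(A_t,X_t,B_t)\le \mu(A_t,X_t,B_t,\theta_*)+\varepsilon$ injects a per-round $\varepsilon$ before the partition-and-Cauchy--Schwarz step.
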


We can formally define $\varepsilon$ and $\delta$ in terms of the tails of the conditional reward distributions. Let $\mu(a, x, s, \theta) - \bar{\mu}(a, x, s)$ be $v^2$-sub-Gaussian in $\theta \sim P_1$ for all $a$, $x$, and $s$. For $\delta > 0$, choosing $\varepsilon = O(\sqrt{2v \log(K|\Xset||\Sset|/\delta}))$ satisfies the conditions on $\varepsilon$ and $\delta$ needed for \cref{cor:posterior_regret_uncertain}.
The proof uses $U_t$ in Eq. \eqref{eqn:posterior_regret_decomposition} as $U_t(a) = \arg\max_{s \in C_t}\bar{\mu}(a, X_t, s)$, i.e., quantities in \mmucb are defined using the marginalized conditional means instead of means using a point estimate $\hat{\theta}$.


\section{Experiments}
\label{sec:experiments}

In this section, we evaluate our algorithms on both synthetic and real-world datasets.
We compare the following methods: (i) \textbf{UCB}: \ucb/\linucb with no offline model \citep{ucb,linucb}; (ii) \textbf{TS}: \ts/\lints with no offline \citep{ts,lints}; (iii) \textbf{EXP4}: \mexp using offline reward model as experts \citep{exp4} (iv) \textbf{mUCB, mmUCB}: our proposed UCB algorithms \mucb and \mmucb; (v) \textbf{mTS, mmTS}: our proposed TS algorithms \mts and \mmts.
In contrast to our methods, the UCB and TS baselines do not use an offline learned model. \ucb and \ts are used for non-contextual problems, while \linucb and \lints are used for contextual bandit experiments. \mexp uses the offline-learned model as a mixture-of-experts, where each expert plays the best arm given context under its corresponding latent state. Because we measure ``fast personalization," we use short horizons of at most $500$.

\begin{figure}
\begin{minipage}{0.33\textwidth}
    \centering
    \includegraphics[width=\linewidth]{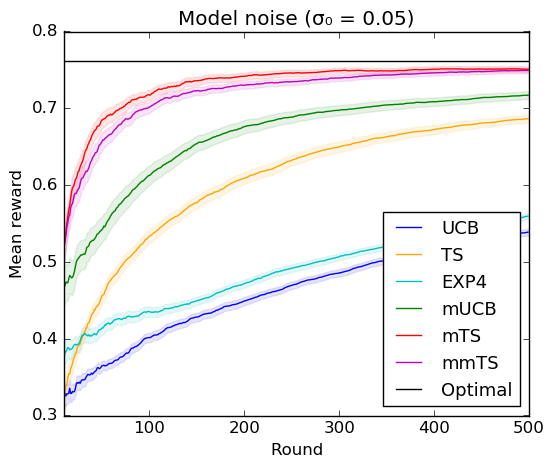}
\end{minipage}
\begin{minipage}{0.33\linewidth}
    \centering
    \includegraphics[width=\linewidth]{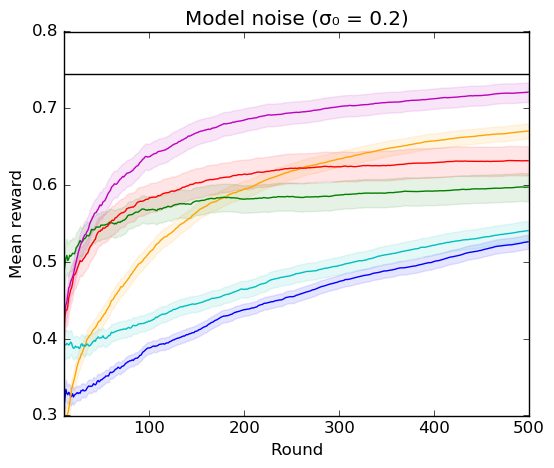}
\end{minipage}
\begin{minipage}{0.33\linewidth}
    \centering
    \includegraphics[width=\linewidth]{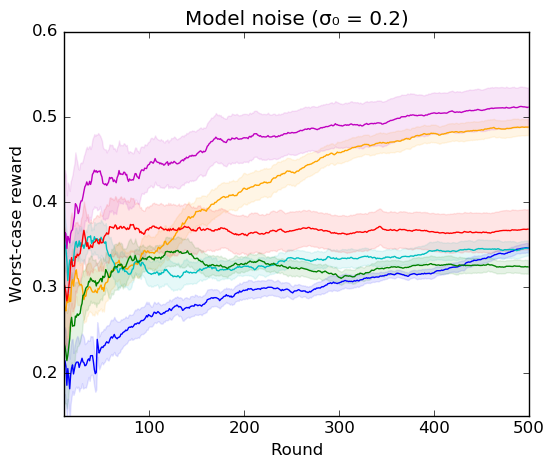}
\end{minipage}
\caption{Left: Mean reward and standard error in simulation for small model noise ($\sigma_0 = 0.05$). Middle/Right: Mean/worst-case reward and standard error for large model noise ($\sigma_0 = 0.2$).}
\label{fig:sim_results}
\vspace{-0.1in}
\end{figure}

\vspace{-0.05in}
\subsection{Synthetic Experiments}
\vspace{-0.05in}

We first experiment with synthetic (non-) multi-armed bandits with $\Aset = [10]$ and $\Sset = [5]$. Mean rewards are sampled uniformly at random $\mu(a, s) \sim \mathsf{Uniform}(0, 1)$ for each $a \in \Aset, s \in \Sset$. Using rejection sampling, we constrain the suboptimality gap of all actions to be at least $0.1$ at each $s$ to ensure significant comparisons between methods on short timescales. 
Observed rewards are drawn i.i.d.\ from $P(\cdot \mid a, s) = \mathcal{N}(\mu(a, s), \sigma^2)$ with $\sigma = 0.5$. We evaluate each algorithm using $100$ independent runs with a uniformly sampled latent state, and report average reward over time. 
We analyze the effect of model misspecification by perturbing the reward means with various degrees of noise:
given noise $\sigma_0$, estimated means are sampled from $\hat{\mu}(a, s) \sim \mathcal{N}(\mu(a, s), \sigma_0^2)$ for each arm and latent state. The estimated reward model $\hat{\theta}$ is the concatenation of all estimated means.

The leftmost plot in Fig.~\ref{fig:sim_results} shows average reward obtained over time when model noise  $\sigma_0 = 0.05$ is small. The middle plot increases noise to $\sigma_0 = 0.2$. Our algorithms \mucb and \mts perform much better than baselines \ucb and \ts when model noise is low, but degrade with higher noise, since neither accounts for model error. By contrast, \mmts outperforms \mts in the high-noise setting. However, \mmucb (not reported in the plot to reduce clutter) performs the same as \mucb; this is likely due to the conservative nature of UCB. Though having similar worst-case guarantees, \mexp performs poorly, suggesting that our algorithms generally use the offline model more intelligently.

The rightmost plot in Fig.~\ref{fig:sim_results} is the same as the middle one, but shows the ``worst-case'' performance by averaging the $10\%$ of runs, where the final reward of each method is lowest. Baselines \ucb and \ts are unaffected by model misspecification, and have better worst-case performance than \mucb and \mts. However, \mmts beats both online baselines; this demonstrates that uncertainty-awareness makes our algorithms more robust to model misspecification or learning error.

\vspace{-0.05in}
\subsection{MovieLens Results}
\vspace{-0.05in}

We also assess the empirical performance of our algorithms on MovieLens 1M \citep{movielens}, a large-scale, collaborative filtering dataset, comprising 6040 users rating 3883 movies. Each movie has a set of genres. We filter the data to include only users who rated at least 200 movies, and movies rated by at least 200 users, resulting in 1353 users and 1124 movies.

We randomly select $50\%$ of all ratings as our ``offline" training set, and use the remaining 50$\%$ as a test set, giving sparse ratings matrices $M_{\text{train}}$ and $M_{\text{test}}$.
We complete each matrix using least-squares matrix completion \citep{pmf} with rank $20$. We chose rank to be expressive enough to yield low prediction error, but small enough to not overfit. The learned factors are $M_{\text{train}} = \hat{U} \hat{V}^T$ and $M_{\text{test}} = U V^T$. User $i$ and movie $j$ correspond to row $U_i$ and $V_j$, respectively, in the matrix factors.

We define a latent contextual bandit instance with $\Aset = [20]$ and $\Sset = [5]$ as follows. Using $k$-means on rows of $\hat{U}$, we cluster users into $5$ clusters, where $5$ is the largest value that does not yield empty clusters. First, a user $i$ is sampled at uniformly at random. At each round, $20$ genres, then a movie for each genre, are uniformly sampled, creating a set of diverse movies.
Context $x_t \in \mathbb{R}^{20 \times 20}$ is the matrix with training movie vectors for the $20$ sampled movies as rows, i.e., movie $j$ has vector $\hat{V}_j$. The agent chooses among movies in $x_t$. The reward distribution $\mathcal{N}(U_i^T V_j, 0.5)$ for movie $j$ under user $i$ has the product of the test user and movie vectors as its mean. We evaluate on $100$ users.

Let $\hat{\theta}$ be the mean of the cluster. We assume a Gaussian prior over parameters with mean $\hat{\theta}$ and use the empirical covariance of user factors within each cluster as its covariance. Notice that baselines \linucb and \lints are also given movie vectors from the training set via context, and need to only learn the user vector. This is more information than low-rank bandit algorithms \cite{clustering_bandits_1}, which jointly learn user and movie representations, and are unlikely to converge on the short timescales we consider.

The left plot in Fig.~\ref{fig:movielens} show the mean rating and standard error of the six algorithms (as above, \mmucb is similar to \mucb and is not shown). 
\mucb and \mts adapt or ``personalize'' to users more quickly than \linucb and \lints even with access to movie vectors, and converge to better policies than \mexp. Despite this, both \mucb and \mts are affected by model misspecification. By contrast, \mmts handles model uncertainty and converges to the best reward. The right plot in Fig.~\ref{fig:movielens}
shows average results for the bottom $10\%$ of users. Again, \mmts dramatically outperforms \mts in the worst-case. 

\begin{figure}
\centering
\begin{minipage}{0.33\textwidth}
    \centering
    \includegraphics[width=\linewidth]{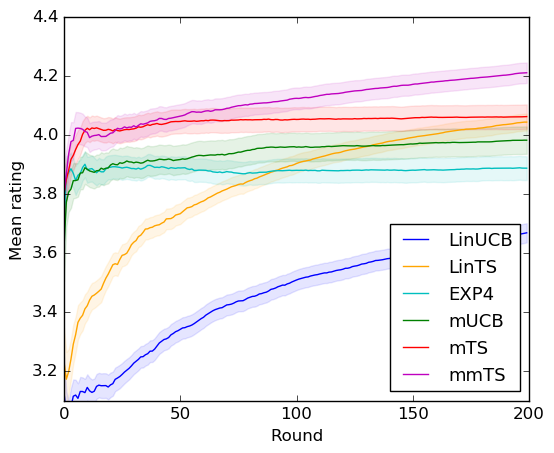}
\end{minipage}
\begin{minipage}{0.33\linewidth}
    \centering
    \includegraphics[width=\linewidth]{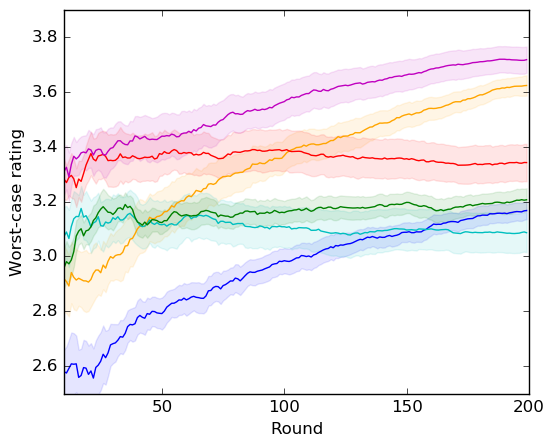}
\end{minipage}
\caption{Mean/worst-case rating and standard error on MovieLens 1M.}
\label{fig:movielens}
\vspace{-0.1in}
\end{figure}


\section{Related Work}

\textbf{Latent bandits.}
Latent contextual bandits admit faster personalization than standard contextual bandit strategies, such as LinUCB ~\citep{linucb} or linear TS \citep{lints,lints_2}. The closest work to ours is that of \citet{latent_bandits}, which proposes and analyzes non-contextual UCB algorithms under the assumption that the mean rewards for each latent state are known. \citet{latent_contextual_bandits} extend this formulation to the contextual bandits case, but consider offline-learned policies deployed as a mixture via EXP4. Bayesian policy reuse (BPR) \citep{bpr} selects offline-learned policies by maintaining a belief over the optimality of each policy, but no analysis exists. Our work subsumes prior work by providing contextual, uncertainty-aware UCB and TS algorithms and a unified analysis of the two.

\textbf{Low-rank bandits.} 
Low-rank bandits can be viewed as a generalization of latent bandits, where low-rank matrices that parameterize the reward are learned jointly with bandit strategies. \citet{ts_online_mf} propose a TS algorithm for low-rank matrix factorization; however, their algorithm is inefficient and analysis is provided only for the rank-$1$ case. \citet{latent_confounders} analyze an $\varepsilon$-greedy algorithm, but rely on properties that rarely hold in practice. Another body of work studies online clustering of bandit instances, which is based on a more specific low-rank structure \citep{clustering_bandits_1,clustering_bandits_2, clustering_bandits_3, clustering_bandits_4}. Yet another deals with low-rank matrices where both rows and columns are arms \citep{bandits_rank_1,bernoulli_bandits_rank_1}. None of this existing work leverages models that are learned offline---an important practical consideration given the general availability of offline data---and only linear reward models are learned. In \cref{sec:experiments}, we compare against idealized versions of these methods where low-rank features are provided.

\textbf{Structured bandits.}
In structured bandits, arms are related by a common latent parameter. \citet{structured_bandits} propose a UCB algorithm for the multi-arm setting. Recently, \citet{unified_structured_bandits} propose a unified framework that adapts classic bandit algorithms, such as UCB and TS, to the multi-arm structured bandit setting. Though similar to our work, the algorithms proposed differ in key aspects: we track confidence intervals around latent states instead of arms, and develop contextual algorithms that are robust to model (parameter) misspecification.



\section{Conclusions}

In this work, we studied the latent bandits problem, where the rewards are parameterized by a discrete, latent state. We adopted a framework in which an offline-learned model is combined with UCB and Thompson sampling exploration to quickly identify the latent state. Our approach handles both context and misspecified models. We analyzed our proposed algorithms using a unified framework, and validated them using both synthetic data and the MovieLens 1M dataset.
A natural extension of our work is to use temporal models to handle latent state dynamics. This is useful for applications where user preferences, tasks or intents change fairly quickly. For UCB, we can leverage existing adaptations to UCB algorithms (e.g., discounting, sliding windows). ~\citep{nonstationary_ucb}. For TS, we can take the dynamics into the account when computing the posterior. 



\bibliographystyle{named}
\bibliography{neurips_2020}

\clearpage
\onecolumn
\appendix


\section{Details of \mmts for Exponential Families}
\label{sec:mmts_specific}
For a matrix (vector) $M$, we let $M_i$ denote its $i$-th row (element). Using this notation, we can write $\theta = (\theta_s)_{s \in \Sset}$ as a vector of parameters, one for each latent state; each $\theta_s$ parameterizes the reward under latent state $s$.
We want to show that the sampling step in \mmts can be done tractably when the conditional reward distribution and model prior are in the exponential family.

We can write the conditional reward likelihood as,
\begin{align*}
    P(r \mid a, x, \theta, s) = \exp\left[\phi(r, a, x)^\top \kappa(\theta_s) - g(\theta_s) \right],
\end{align*}
where $\phi(r, a, x)$ are sufficient statistics for the observed data, $\kappa(\theta_s)$ are the natural parameters, and $g(\theta_s) = \log \sum_{r, a, x}\phi(r, a, x)^\top \kappa(\theta_s)$ is the log-partition function. 
Then, we assume the prior over $\theta_s$ to be the conjugate prior of the likelihood, which will have the general form,
\begin{align*}
    P_1(\theta_s) = H(\phi_0, m_0)\exp\left[\phi_0^\top \kappa(\theta_s) - m_0 g(\theta_s) \right],
\end{align*}
where $\phi_0, m_0$ are parameters controlling the prior, and $H(\phi_0, m_0)$ is the normalizing factor.

For round $t$, recall that $N_t(s) = \sum_{\ell = 1}^{t - 1} \indicator{B_\ell = s}$ is the number of times $s$ is selected. We can write the joint posterior as,
\begin{align}
P_t(s, \theta) 
&\propto
P_1(s) P_1(\theta_s)
\prod_{\ell = 1}^{t-1} 
\exp\left[\phi(R_\ell, A_\ell, X_\ell)^\top \kappa(\theta_s) - g(\theta_s)\right]^{\indicator{B_\ell = s}} \label{eqn:exponential_joint_posterior} \\
&\propto P_1(s)
\exp\left[\left(\phi_0 + \sum_{\ell = 1}^{t-1}\indicator{B_\ell = s}\phi(R_\ell, A_\ell, X_\ell) \right)^\top \kappa(\theta_s) - (m_0 + N_t(s))g(\theta_s) \right]. \nonumber
\end{align}
The general form for an exponential family likelihood is still retained. The prior-to-posterior conversion simply involves updating the prior parameters with sufficient statistics from the data. Specifically, updated parameters $\phi_t \leftarrow \phi_0 + \sum_\ell \indicator{B_\ell = s}\phi(R_\ell, A_\ell, X_\ell)$ and $m_t \leftarrow m_0 + N_t(s)$ form the conditional posterior
$
P_t(\theta_s) = H(\phi_t, m_t)\exp\left[\phi_t^\top \kappa(\theta_s) - m_t g(\theta_s) \right]
$.

For round $t$, the marginal posterior of $s$ is given by,
\begin{align*}
 P_t(s) 
  &\propto P_1(s)
  \int_\theta P_1(\theta_s)
  \exp\left[\phi_t^\top \kappa(\theta_s) - m_t g(\theta_s) \right]
  d \theta \\
  &\propto P_1(s) H(\phi_t, m_t).
\end{align*}
So, for all states $s$, and parameters $\theta$, the posterior probabilities $P_t(s)$ and $P_t(\theta_s)$ have analytic, closed-form solutions. Thus, sampling from the joint posterior can be done tractably by sampling state $s$ from its marginal posterior, then parameters $\theta_s$ from its conditional posterior.

\section{Pseudocode of \mmts for Gaussians}
\label{sec:mmts_pseudocode}
Next, we provide specific variants of \mmts when both the model prior and conditional reward likelihood are Gaussian. This is a common assumption for Thompson sampling algorithms \cite{ts,lints,lints_2}. In this case, the joint posterior in Eq. \eqref{eqn:exponential_joint_posterior} consists of Gaussians. We adopt the notation that $\mathcal{N}(r \mid \mu, \sigma^2) \propto \exp[- (r - \mu)^2 / 2 \sigma^2]$ is the Gaussian likelihood of $r$ given mean $\mu$ and variance $\sigma^2$.

We detail algorithms for two cases: \cref{alg:thompson_gauss} is for a multi-armed bandit with independent arms (no context), and \cref{alg:thompson_lingauss} is for a linear bandit problem. In the first case, we have that $\theta_s \in \mathbb{R}^K$ are the mean reward vectors where $\theta_{s,a} = \mu(a, s, \theta)$. 
In the other case, we assume that context is given by $x \in \mathbb{R}^{K \times d}$ where $x_a \in \mathbb{R}^d$ is the feature vector for arm $a$. Then, we have that $\theta_s \in \mathbb{R}^d$ are rank-$d$ vectors such that $x_a^\top\theta_s = \mu(a, x, s, \theta)$. Both algorithms are efficient to implement, and perform exact sampling from the joint posterior.

\begin{algorithm}[H]
\caption{Independent Gaussian \mmts (Non-contextual)}\label{alg:thompson_gauss}
\begin{algorithmic}[1]
  \State \textbf{Input:}
  \State \quad Prior over model parameters $P_1(\theta_s) = \mathcal{N}(\bar{\theta}_s, \sigma_0^2I), \forall s \in \Sset$
  \State \quad Prior over latent states $P_1(s)$
  \Statex
  \For {$t \gets 1, 2, \hdots$}
    \LineComment{Step 1: sample latent state from marginal posterior.}
    \State Define
    \begin{align*}
        P_t(s)
        \propto P_1(s) \prod_{\ell = 1}^{t-1} \mathcal{N}(R_\ell \mid \bar{\theta}_{s, A_\ell}, \sigma_0^2 + \sigma^2)^{\indicator{B_\ell = s}}
    \end{align*}
    \State Sample $B_t \sim P_t$
    \Statex\LineComment{Step 2: sample model parameters from conditional posteriors.}
    \State Define
    \begin{align*}
    N_t(a, s) \leftarrow \sum_{\ell = 1}^{t-1} \indicator{A_\ell = a, B_\ell = s},
    \text{ and } \quad
    S_t(a, s) \leftarrow \sum_{\ell = 1}^{t-1} \indicator{A_\ell = a, B_\ell = s} R_\ell
    \end{align*}
    \State For each $s \in \Sset$, sample $\hat{\theta}_s \sim \mathcal{N}(M_s, \mathsf{diag}(K_s))$, where
    \begin{align*}
    K_{s,a} \leftarrow \left(\sigma_0^{-2} + N_t(a, s) \sigma^{-2}\right)^{-1},
    \text{ and } \quad 
     M_{s,a} \leftarrow
     K_{s,a} \left(\sigma_0^{-2}\bar{\theta}_{s, a} + \sigma^{-2}S_t(a, s) \right)
    \end{align*}
    \State Select 
    $A_t \leftarrow \arg\max_{a \in A} \hat{\theta}_{B_t, a}$
\EndFor
\end{algorithmic}
\end{algorithm}

\begin{algorithm}[H]
\caption{Linear Gaussian \mmts}\label{alg:thompson_lingauss}
\begin{algorithmic}[1]
  \State \textbf{Input:}
  \State \quad Prior over model parameters $P_1(\theta_s) = \mathcal{N}(\bar{\theta}_s, \Sigma_0), \forall s \in \Sset$
  \State \quad Prior over latent states $P_1(s)$
  \Statex
  \For {$t \gets 1, 2, \hdots$}
    \LineComment{Step 1: sample latent state from marginal posterior.}
    \State Define
    \begin{align*}
        P_t(s)
        \propto P_1(s) \prod_{\ell = 1}^{t-1}
        \mathcal{N}(R_\ell \mid X_{\ell, A_\ell}^\top \bar{\theta}_{s}, \, X_{\ell, A_\ell}^\top \Sigma_0^{-1}X_{\ell, A_\ell} + \sigma^2)^{\indicator{B_\ell = s}}
    \end{align*}
    \State Sample $B_t \sim P_t$
    \Statex\LineComment{Step 2: sample model parameters from conditional posteriors.}
    \State Define $N_t(s) \leftarrow \sum_{\ell = 1}^{t-1} \indicator{B_\ell = s}$,
    \begin{align*}
    S_t(s) \leftarrow I + \sum_{\ell = 1}^{t-1} \indicator{B_\ell = s} X_{\ell, A_\ell} X_{\ell, A_\ell}^\top,
    \text{ and } \quad 
    F_t(s) \leftarrow \sum_{\ell = 1}^{t-1} \indicator{B_\ell = s} X_{\ell, A_\ell}R_\ell
    \end{align*}
    \State For each $s \in \Sset$, compute $\hat{\beta}_s \leftarrow S_t(s)^{-1}F_t(s)$, and $\hat{\Sigma}_s \leftarrow \sigma^2 S_t(s)^{-1}$
    \State For each $s \in \Sset$, sample $\hat{\theta}_s \sim \mathcal{N}(M_s, K_s)$, where
    \begin{align*}
     K_s \leftarrow \left(\Sigma_0^{-1} + N_t(s) \hat{\Sigma}_s^{-1} \right)^{-1},
    \text{ and } \quad 
     M_s \leftarrow
     K_s \left(\Sigma_0^{-1}\bar{\theta}_s + N_t(s) \hat{\Sigma}_s^{-1} \hat{\beta}_s  \right)
    \end{align*}
    \State Select 
    $A_t \leftarrow \arg\max_{a \in A} X_{\ell, a}^\top \hat{\theta}_{B_t}$
\EndFor
\end{algorithmic}
\end{algorithm}

\newpage
\section{Proofs}


Our proofs rely on the following concentration inequality, which is a straightforward extension of the Azuma-Hoeffding inequality to sub-Gaussian random variables.

\begin{lemma}
\label{thm:azuma_general}
Let $(Y_t)_{t \in [n]}$ be a martingale difference sequence with respect to filtration $(\mathcal{F}_t)_{t \in [n]}$, that is $\E{}{Y_t \mid \mathcal{F}_{t - 1}} = 0$ for any $t \in [n]$. Let $Y_t \mid \mathcal{F}_{t - 1}$ be $\sigma^2$-sub-Gaussian for any $t \in [n]$. Then for any $\varepsilon > 0$, 
\begin{align*}
  \prob{\Big|\sum_{t = 1}^n Y_t\Big| \geq \varepsilon}
  \leq 2 \exp\left[- \frac{\varepsilon^2}{2 n \sigma^2}\right]\,.
\end{align*}
\end{lemma}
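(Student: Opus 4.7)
\textbf{Proof plan for \cref{thm:azuma_general}.} The result is the standard sub-Gaussian Azuma inequality, so my plan is the textbook Chernoff-plus-tower-property argument, adapted to the conditional sub-Gaussian assumption. First, I will bound the upper tail $\prob{\sum_{t=1}^n Y_t \geq \varepsilon}$; the lower tail then follows by applying the same argument to the martingale difference sequence $(-Y_t)_{t \in [n]}$ (which is still conditionally $\sigma^2$-sub-Gaussian with mean zero), and a union bound supplies the factor of $2$.

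For the upper tail, fix $\lambda > 0$ and apply the Chernoff bound:
\begin{align*}
  \prob{\sum_{t=1}^n Y_t \geq \varepsilon}
  \leq e^{-\lambda \varepsilon}\, \E{}{\exp\!\Bigl(\lambda \sum_{t=1}^n Y_t\Bigr)}.
\end{align*}
The key step is to peel off the terms one at a time using the tower property. Writing $Z_k = \exp(\lambda \sum_{t=1}^k Y_t)$, I have
\begin{align*}
  \E{}{Z_n}
  = \E{}{Z_{n-1}\, \E{}{\exp(\lambda Y_n) \mid \mathcal{F}_{n-1}}}
  \leq \E{}{Z_{n-1}}\, \exp\!\Bigl(\tfrac{\sigma^2 \lambda^2}{2}\Bigr),
\end{align*}
where the inequality uses the conditional sub-Gaussian hypothesis (since $Y_n \mid \mathcal{F}_{n-1}$ is mean-zero and $\sigma^2$-sub-Gaussian, its conditional MGF is bounded by $\exp(\sigma^2 \lambda^2/2)$), and the fact that $Z_{n-1}$ is $\mathcal{F}_{n-1}$-measurable so it factors out of the inner expectation. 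Iterating this $n$ times gives $\E{}{Z_n} \leq \exp(n \sigma^2 \lambda^2 / 2)$.

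Substituting back, $\prob{\sum_{t=1}^n Y_t \geq \varepsilon} \leq \exp(n \sigma^2 \lambda^2 / 2 - \lambda \varepsilon)$, and optimizing over $\lambda > 0$ by choosing $\lambda = \varepsilon / (n \sigma^2)$ yields the desired bound $\exp(-\varepsilon^2 / (2 n \sigma^2))$. Combining with the symmetric lower-tail bound via a union bound completes the proof.

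There is no genuine obstacle here; this is a direct adaptation of the Azuma-Hoeffding proof. The only small subtlety worth flagging is the need to verify that $-Y_t$ is also conditionally mean-zero and $\sigma^2$-sub-Gaussian (both follow immediately from the symmetric definition of the sub-Gaussian MGF bound, $\E{}{\exp(\lambda Y_t) \mid \mathcal{F}_{t-1}} \leq \exp(\sigma^2 \lambda^2 / 2)$ for all $\lambda \in \mathbb{R}$), so that the same argument applies verbatim to the lower tail.
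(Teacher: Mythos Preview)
Your proposal is correct and follows essentially the same approach as the paper's own proof: Chernoff bound, tower property to peel off one term at a time using the conditional sub-Gaussian MGF bound, optimize $\lambda = \varepsilon/(n\sigma^2)$, then repeat for the lower tail to get the factor of $2$.
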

\begin{proof}
For any $\lambda > 0$, which we tune later, we have
\begin{align*}
  \prob{\sum_{t = 1}^n Y_t \geq \varepsilon}
  = \prob{\prod_{t = 1}^n e^{\lambda Y_t} \geq e^{\lambda \varepsilon}}
  \leq e^{- \lambda \varepsilon} \E{}{\prod_{t = 1}^n e^{\lambda Y_t}}\,.
\end{align*}
The inequality is by Markov's inequality. From the conditional independence of $Y_t$ given $\mathcal{F}_{t - 1}$, the right term becomes
\begin{align*}
  \E{}{\prod_{t = 1}^n e^{\lambda Y_t}}
  = \E{}{\E{}{e^{\lambda Y_n} \mid \mathcal{F}_{n-1}}
  \prod_{t = 1}^{n - 1} e^{\lambda Y_t}}
  \leq e^{\frac{\lambda^2 \sigma^2}{2}}
  \E{}{\prod_{t = 1}^{n - 1} e^{\lambda Y_t}}
  \leq e^{\frac{n \lambda^2 \sigma^2}{2}}\,.
\end{align*}
We use that $Y_n \mid \mathcal{F}_{n - 1}$ is $\sigma^2$-sub-Gaussian in the first inequality, and recursively repeat for all rounds in the second. So we have
\begin{align*}
  \prob{\sum_{t = 1}^n Y_t \geq \varepsilon}
  \leq \min_{\lambda > 0} e^{-\lambda \varepsilon + \frac{n\lambda^2 \sigma^2}{2}} \,.
\end{align*}
The minimum is achieved at $\lambda = \varepsilon / (n \sigma^2)$. Therefore
\begin{align*}
  \prob{\sum_{t = 1}^n Y_t \geq \varepsilon}
  \leq \exp\left[- \frac{\varepsilon^2}{2 n \sigma^2}\right]\,.
\end{align*}
Now we apply the same proof to $\prob{- \sum_{t = 1}^n Y_t \geq \varepsilon}$, which yields a multiplicative factor of $2$ in the upper bound. This concludes the proof.
\end{proof}

\subsection{Proof of \cref{thm:ucb_regret}}
\label{sec:thm1_proof}

Recall that $s_* \in \Sset, \theta_* \in \Theta$ are the true latent state and model. Let $\mu(a, x) = \mu(a, x, s_*, \theta_*)$ be the true mean rewards given observed context and action.
Let
\begin{align}
    E_t = \left\{
    \forall s \in \Sset: \, 
    \abs{\sum_{\ell = 1}^{t-1} 
    \mathbbm{1}\{B_\ell = s\} \left(\mu(A_\ell, X_\ell) - R_\ell\right)} \leq \sigma \sqrt{6N_t(s) \log n} \right\}
    \label{eqn:ucb_event}
\end{align} 
be the event that the total realized reward under each played latent state is close to its expectation. Let $E = \cap_{t=1}^n E_t$ be the event that this holds for all rounds, and $\bar{E}$ be its complement.
We can rewrite the expected $n$-round regret by
\begin{align}
\begin{split}
\label{eqn:regret_event_decomposition}
    \Regret(n)
    &= \E{}{\indicator{\bar{E}} \Regret(n)} + 
    \E{}{\indicator{E} \Regret(n)} \\
    &\leq  \E{}{\indicator{\bar{E}} \sum_{t = 1}^n \mu(A_{t, *}, X_t) - \mu(A_t, X_t)}  \\
    &\quad + 
    \E{}{\indicator{E}\sum_{t = 1}^n \left(\mu(A_{t, *}, X_t) - U_t(A_{t, *})\right)} + \E{}{\indicator{E} \sum_{t = 1}^n \left(U_t(A_t) - \mu(A_t, X_t)\right)}\,, \hspace{-0.1in}
\end{split}
\end{align} 
where we use the regret decomposition in Eq. \eqref{eqn:ucb_regret_decomposition} in the inequality. 

Our first lemma is that the probability of $\bar{E}$ occurring is low. Without context, the lemma would follow immediately from Hoeffding's inequality. Since we have context generated by some random process, we instead turn to martingales. 

\begin{lemma}
\label{thm:ucb_concentration} Let $E_t$ be defined as in Eq. \eqref{eqn:ucb_event} for all rounds $t$, $E = \cap_{t = 1}^n E_t$, and $\bar{E}$ be its complement. Then $\prob{\bar{E}} \leq 2|\Sset|n^{-1}$.
\vspace{-0.05in}
\end{lemma}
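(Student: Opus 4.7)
The plan is to apply \cref{thm:azuma_general} to a carefully constructed martingale for each latent state, then take a union bound. The subtle point is that the sum inside $E_t$ has a random number $N_t(s)$ of nonzero terms, so applying the inequality directly to the $t-1$ raw summands would yield only a $\sigma \sqrt{n \log n}$ bound rather than the desired $\sigma \sqrt{N_t(s) \log n}$. I would sidestep this by reindexing the sum along the subsequence of rounds at which $s$ is selected.

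Fix $s \in \Sset$ and let $(\mathcal{F}_\ell)$ be the filtration generated by the full interaction history through round $\ell$. Since \mucb chooses $A_\ell$ and $B_\ell$ as deterministic functions of $H_\ell$ and $X_\ell$, both are $\mathcal{F}_\ell$-measurable, and the sub-Gaussian reward assumption gives $\E{}{R_\ell \mid \mathcal{F}_{\ell-1}, X_\ell, A_\ell, B_\ell} = \mu(A_\ell, X_\ell)$ with conditional variance proxy $\sigma^2$. Let $\tau_k^{(s)}$ denote the $k$-th round $\ell$ with $B_\ell = s$, and define $Z_k^{(s)} = \mu(A_{\tau_k^{(s)}}, X_{\tau_k^{(s)}}) - R_{\tau_k^{(s)}}$, setting $Z_k^{(s)} = 0$ whenever $\tau_k^{(s)}$ is undefined. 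A standard optional-stopping argument shows that $(Z_k^{(s)})_{k \in [n]}$ is a $\sigma^2$-sub-Gaussian martingale difference sequence with respect to the stopped filtration $\mathcal{G}_k^{(s)} = \mathcal{F}_{\tau_k^{(s)}}$, and one obtains the key identity $\sum_{\ell = 1}^{t - 1} \indicator{B_\ell = s}(\mu(A_\ell, X_\ell) - R_\ell) = \sum_{j = 1}^{N_t(s)} Z_j^{(s)}$.

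With the reindexing in place, \cref{thm:azuma_general} applied at any deterministic $k \in [n]$ gives $\prob{|\sum_{j = 1}^k Z_j^{(s)}| \geq \sigma \sqrt{6 k \log n}} \leq 2 \exp(-3 \log n) = 2 n^{-3}$. Because $N_t(s) \in \{0, 1, \ldots, n\}$ almost surely, the event $\bar{E}$ is contained in $\{\exists s \in \Sset,\, k \in [n] : |\sum_{j = 1}^k Z_j^{(s)}| > \sigma \sqrt{6 k \log n}\}$. A union bound over $s \in \Sset$ and $k \in [n]$ then yields $\prob{\bar{E}} \leq 2 |\Sset| n \cdot n^{-3} = 2 |\Sset| n^{-2} \leq 2 |\Sset| n^{-1}$, which is the claim (in fact slightly stronger). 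The main obstacle is precisely the random count $N_t(s)$: the stopping-time reindexing is what converts the naive $\sqrt{n}$ dependence from Azuma applied to the raw sum into the sharper $\sqrt{N_t(s)}$ dependence needed to close Step 3 of the UCB regret analysis.
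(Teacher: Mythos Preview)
Your argument is correct and follows the same core strategy as the paper: identify a $\sigma^2$-sub-Gaussian martingale difference sequence associated with each latent state $s$, apply \cref{thm:azuma_general} at each possible visit count, and union bound over states and counts. The paper executes this by defining $Y_\ell(s) = \indicator{B_\ell = s}(\mu(A_\ell, X_\ell) - R_\ell)$ and conditioning on $N_t(s) = u$ before invoking Azuma, then union bounding over $t \in [n]$, $s \in \Sset$, and $u \in [t-1]$ to obtain exactly $2|\Sset| n^{-1}$. Your stopping-time reindexing via $\tau_k^{(s)}$ is a cleaner way to justify the same step---it makes precise why the effective length in Azuma is $k$ rather than $t-1$, which the paper leaves implicit---and by observing that $\bar{E}$ depends only on the partial sums $\sum_{j=1}^{k} Z_j^{(s)}$ for $k \in [n]$, you avoid the redundant outer union over $t$ and land at $2|\Sset| n^{-2}$, a strictly sharper bound than the paper states.
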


\begin{proof}
We see that the choice of action given observed context depends on past rounds. This is because the upper confidence bounds depend on which latent states are eliminated, which depend on the history of observed contexts.

For each latent state $s$ and round $t$, let $Y_t(s) = \indicator{B_t = s} (\mu(A_t, X_t) - R_t)$. Observe that in any round $t$, we have $Y_t(s) \mid X_t, H_t$ is $\sigma^2$-sub-Gaussian for any $s$ and round $t$. This implies that $(Y_t(s))_{t \in [n]}$ is a martingale difference sequence with respect to context and history $(X_t, H_t)_{t \in [n]}$, or $\E{}{Y_t(s) \mid X_t, H_t} = 0$ for all rounds $t \in [n]$.

For any round $t$, and state $s \in \Sset$, and any $N_t(s) = u$ for $u < t$, we have the following due to \cref{thm:azuma_general}, 
\begin{align*}
  \prob{\abs{\sum_{\ell=1}^{t-1} Y_t(s)} \geq  \sigma \sqrt{6 u \log n}}
  \leq 2\exp\left[-3\log n\right]
  = 2 n^{-3}\,.
\end{align*}
So, by the union bound, we have
\begin{align*}
  \prob{\bar{E}}
  \leq \sum_{t = 1}^n \sum_{s \in \Sset} \sum_{u = 1}^{t - 1}
  \prob{\abs{\sum_{\ell=1}^{u - 1} Y_t(s)} \geq  \sigma \sqrt{6 u \log n}}
  \leq 2 |\Sset| n^{-1}\,.
\end{align*}
\vspace{-0.05in}
\end{proof}

The first term in Eq. \eqref{eqn:regret_event_decomposition} is small because the probability of $\bar{E}$ is small. Using \cref{thm:ucb_concentration}, and that total regret is bounded by $n$, we have,
$
\E{}{\indicator{\bar{E}} \Regret(n)} \leq n\prob{\bar{E}} \leq 2|\Sset|.
$

For round $t$, the event $\mu(A_{t, *}, X_t) \geq U_t(A_{t, *})$ occurs only if $s_* \notin C_t$ also occurs. By the design of $C_t$ in \mucb, this happens only if $G_t(s_*) > \sigma\sqrt{6 N_t(s)\log n}$. Event $E_t$ says that the opposite is true for all states, including true state $s_*$. So, the second term in Eq. \eqref{eqn:regret_event_decomposition} is at most $0$. 

Now, consider the last term in Eq. \eqref{eqn:regret_event_decomposition}. Let $T_s = \{t \leq n: B_t = s\}$ denote the set of rounds where latent state $s$ is selected. We have, 
\begin{align*}
    \indicator{E} \sum_{t = 1}^n \left(U_t(A_t) - \mu(A_t, X_t)\right)
    &= \indicator{E} \sum_{s \in S} \sum_{t \in T_s}  \left(\mu(A_t, X_t, s) - \mu(A_t, X_t) \right) \\
    &= \indicator{E} \sum_{s \in S} \sum_{t \in T_s}  \left(\mu(A_t, X_t, s) - R_t + R_t - \mu(A_t, X_t) \right) \\
    &\leq \indicator{E} \sum_{s \in S} \left(G_n(s) + \sum_{t \in T_s} \left(R_t - \mu(A_t, X_t) \right)\right) \\
    &\leq \sum_{s \in S} \left(1 + 2\sigma \sqrt{6N_n(s) \log n}\right).
\end{align*}
For the first inequality, we use that the last round $t'$ where state $s$ is selected, we have an upper-bound on the prior gap $G_{t'}(s) \leq \sqrt{6N_{t'}(s)\log n}$. Accounting for the last round yields $G_{n}(s) \leq \sigma \sqrt{6N_n(s)\log n} + 1$. For the last inequality, we use $E$ occurring to bound $\sum_{t \in T_s} \left(R_t - \mu(A_t, X_t) \right) \leq \sigma \sqrt{6N_n(s)\log n}$.

This yields the desired bound on total regret,
\begin{align*}
    \Regret(n)
    &\leq 3|\Sset| + 2\sigma\sqrt{6\log n}\left(\sum_{s \in S} \sqrt{N_n(s)}\right) \\
    &\leq 3|\Sset| + 2\sigma\sqrt{6|\Sset| \log n  \sum_{s \in S}N_n(s)} \\
    &= 3|\Sset| + 2\sigma\sqrt{6|\Sset|n \log n},
\end{align*}
where the last inequality comes from the Cauchy–Schwarz inequality.

\subsection{Proof of \cref{cor:posterior_regret}}

The true latent state $S_* \in \Sset$ is random under Bayes regret. In this case, we still assume that we are given the true model $\theta_*$, so only $S_* \sim P_1$ for known $P_1$. We also have that the optimal action $A_{t, *} = \arg\max_{a \in \Aset} \mu(a, X_t, S_*, \theta_*)$ is random not only due to context, but also $S_*$. 

We define $U_t(a) = \arg\max_{s \in C_t}\mu(a, X_t, S_*, \theta_*)$ as in \mucb. Note the additional randomness due to $S_*$. We can rewrite the Bayes regret as
$
\Bregret(n) 
= \E{}{\Regret(n; S_*, \theta_*)}, 
$
where the outer expectation is over $S_* \sim P_1$. The expression inside the expectation can be decomposed as
\begin{align*}
    \Regret(n, S_*, \theta_*)
    &= \E{}{\indicator{\bar{E}} \sum_{t = 1}^n \mu(A_{t, *}, X_t, S_*) - \mu(A_t, X_t, S_*)}  \\
    &\hspace{-0.5in}+ 
    \E{}{\indicator{E}\sum_{t = 1}^n \left(\mu(A_{t, *}, X_t, S_*) - U_t(A_{t, *})\right)} +
    \E{}{\indicator{E} \sum_{t = 1}^n \left(U_t(A_t) - \mu(A_t, X_t, S_*)\right)}\,,
\end{align*}
where $E, \bar{E}$ are defined as in \cref{sec:thm1_proof}, and we use the decomposition in Eq. \eqref{eqn:posterior_regret_decomposition}.

Each above expression can be bounded exactly as in \cref{thm:ucb_regret}. The reason is that the original upper bounds hold for any $S_*$, and therefore also in expectation over $S_* \sim P_1$. This yields the desired Bayes regret bound.

\subsection{Proof of \cref{thm:ucb_regret_uncertain}}
\label{sec:thm2_proof}
The only difference in the analysis is that we need to incorporate the additional error due to model misspecification.

Let $\mathcal{E} = \{\forall a \in \Aset, x \in \Xset, s \in \Sset: \abs{\hat{\mu}(a, x, s) - \mu(a, x, s)} \leq \varepsilon\}$ be the event that model $\hat{\theta}$ has bounded misspecification and $\bar{\mathcal{E}}$ be its complement. Also let $E$, $\bar{E}$ be defined as in \cref{sec:thm1_proof}.

If $\mathcal{E}$ does not hold, then the best possible upper-bound on regret is $n$; fortunately, we assume in the theorem that the probability of that occurring is bounded by $\delta$. So we can bound the expected $n$-round regret as
\begin{align}
\begin{split}
\label{eqn:regret_event_decomposition_uncertain}
    \Regret(n)
    &= \E{}{\indicator{\bar{\mathcal{E}}} \Regret(n)} + \E{}{\indicator{\bar{E}, \mathcal{E}} \Regret(n)} + 
    \E{}{\indicator{E, \mathcal{E}} \Regret(n)} \\
    &\leq n\delta +
    \E{}{\indicator{\bar{E}, \mathcal{E}} \sum_{t = 1}^n \mu(A_{t, *}, X_t) - \mu(A_t, X_t)}  \\
    &\quad + 
    \E{}{\indicator{E, \mathcal{E}}\sum_{t = 1}^n \left(\mu(A_{t, *}, X_t) - U_t(A_{t, *})\right)} + \E{}{\indicator{E, \mathcal{E}} \sum_{t = 1}^n \left(U_t(A_t) - \mu(A_t, X_t)\right)}\,, \hspace{-0.5in}
\end{split}
\end{align}
where we use the regret decomposition in Eq. \eqref{eqn:ucb_regret_decomposition}.

The second term in Eq. \eqref{eqn:regret_event_decomposition_uncertain} is small because the probability of $\bar{E}$ is small. Using \cref{thm:ucb_concentration}, and that total regret is bounded by $n$, we have,
$
\E{}{\indicator{\bar{E}, \mathcal{E}}\Regret(n)} \leq n\prob{\bar{E}} \leq 2|\Sset|.
$

If $\mathcal{E}$ occurs, the event $\mu(A_{t, *}, X_t) - U_t(A_{t, *}) \geq \varepsilon$ for any round $t$ occurs only if $s_* \notin C_t$ also occurs. By the design of $C_t$ in \mmucb,
this happens if $G_t(s_*) \geq \sigma\sqrt{6 N_t(s)\log n}$. Since
\begin{align*}
G_t(s_*) = \sum_{\ell = 1}^{t-1} 
    \indicator{B_\ell = s_*} \left(\hat{\mu}(A_\ell, X_\ell) - \varepsilon - R_\ell\right)
    \leq \sum_{\ell = 1}^{t-1} 
    \indicator{B_\ell = s_*} \left(\mu(A_\ell, X_\ell) - R_\ell\right),
\end{align*}
we see that event $E_t$ says that the opposite is true for all states, including true state $s_*$. Hence, the third term in Eq. \eqref{eqn:regret_event_decomposition_uncertain} is bounded by $n\varepsilon$. 

Now, consider the last term in Eq. \eqref{eqn:regret_event_decomposition_uncertain}. Let $T_s = \{t \leq n: B_t = s\}$ denote the set of rounds where latent state $s$ is selected. We have, 
\begin{align*}
    \indicator{E, \mathcal{E}}\sum_{t = 1}^n \left(U_t(A_t) - \mu(A_t, X_t)\right)
    &= \indicator{E, \mathcal{E}}\sum_{s \in S} \sum_{t \in T_s}  \left(\hat{\mu}(A_t, X_t, s) - \mu(A_t, X_t) \right) \\
    &= n\varepsilon + \indicator{E, \mathcal{E}} \sum_{s \in S} \sum_{t \in T_s} \left(\hat{\mu}(A_t, X_t, s) - \varepsilon - R_t + R_t - \mu(A_t, X_t) \right) \\
    &\leq n\varepsilon + \indicator{E, \mathcal{E}} \sum_{s \in S} \left(G_n(s) + \sum_{t \in T_s} \left(R_t - \mu(A_t, X_t) \right)\right) \\
    &\leq n\varepsilon + \sum_{s \in S} \left(1 + 2\sigma \sqrt{6N_n(s) \log n}\right)\,.
\end{align*}
For the first inequality, we use that the last round $t'$ where state $s$ is selected, we have an upper-bound on the prior gap $G_{t'}(s) \leq \sqrt{6N_{t'}(s)\log n}$. Accounting for the last round yields $G_{n}(s) \leq \sigma \sqrt{6N_n(s)\log n} + 1$. For the last inequality, we use $E$ occurring to bound $\sum_{t \in T_s} \left(R_t - \mu(A_t, X_t) \right) \leq \sigma \sqrt{6N_n(s)\log n}$.

This yields the desired bound on total regret,
\begin{align*}
    \Regret(n)
    &\leq n\delta + 3|\Sset| 
    + 2n\varepsilon 
    + 2\sigma\sqrt{6\log n}\left(\sum_{s \in S} \sqrt{N_n(s)}\right) \\
    &\leq n\delta +  3|\Sset|  + 2n\varepsilon + 
    2\sigma\sqrt{6|\Sset| \log n  \sum_{s \in S}N_n(s)} \\
    &= n\delta +  3|\Sset|  +
    2n\varepsilon +
    2\sigma\sqrt{6|\Sset|n \log n},
\end{align*}
where the last inequality comes from the Cauchy–Schwarz inequality.

\subsection{Proof of \cref{cor:posterior_regret_uncertain}}

Both latent state $S_* \in \Sset$ and model $\theta_* \in \Theta$ are random, and drawn as $S_*, \theta_* \sim P_1$, where the prior $P_1$ is known. In this case, the true model $\theta_*$ is not known to us.

Using marginalized means $\bar{\mu}(a, x, s)$, and $\varepsilon, \delta > 0$ as defined in the statement of the corollary, we write,
\begin{align*}
G_t(s) = \sum_{\ell = 1}^{t-1} \indicator{B_\ell = s}
  \left(\bar{\mu}(A_\ell, X_\ell, s) - \varepsilon - R_\ell\right),
\end{align*}
and $U_t(a) = \arg\max_{s \in C_t}\bar{\mu}(a, X_t, s)$. This is in contrast to $G_t(s)$ and $U_t(a)$ in \mmucb, which use $\hat{\mu}(a, x, s)$ from a single model. Conceptually though, both $\hat{\mu}(a, x, s)$ and $\bar{\mu}(a, x, s)$ are just $\varepsilon$-close point estimates of $\mu(a, x, s)$ due to the assumptions made about the true model $\theta_*$ in the theorem and corollary, respectively. 

We can rewrite the Bayes regret as
$
\Bregret(n) 
= \E{}{\Regret(n; S_*, \theta_*)}, 
$
where the outer expectation is over $S_*, \theta_* \sim P_1$. The expression inside the expectation can be written as,
\begin{align*}
    \Regret(n; S_*, \theta_*)
    &\leq n\delta + \E{}{\indicator{\bar{E}, \mathcal{E}} \sum_{t = 1}^n \mu(A_{t, *}, X_t, S_*, \theta_*) - \mu(A_t, X_t, S_*, \theta_*)}  \\
    &\hspace{-0.8in}+ 
    \E{}{\indicator{E, \mathcal{E}}\sum_{t = 1}^n \left(\mu(A_{t, *}, X_t, S_*, \theta_*) - U_t(A_{t, *})\right)} +
    \E{}{\indicator{E, \mathcal{E}} \sum_{t = 1}^n \left(U_t(A_t) - \mu(A_t, X_t, S_*, \theta_*)\right)}\,,
\end{align*}
where $\mathcal{E}, E, \bar{E}$ are defined as in \cref{sec:thm2_proof}, and we use the decomposition in Eq. \eqref{eqn:posterior_regret_decomposition}.

The expressions can be bounded exactly as in \cref{thm:ucb_regret_uncertain}. The upper bound is worst-case and holds for any $S_*, \theta_*$, and thus also holds after taking an expectation over the prior $S_*, \theta_* \sim P_1$. This bounds the Bayes regret, as desired.

\end{document}